\renewcommand\@formatdoi[1]{\ignorespaces}
\def\adl@drawiv#1#2#3{%
        \hskip.5\tabcolsep
        \xleaders#3{#2.5\@tempdimb #1{1}#2.5\@tempdimb}%
                #2\z@ plus1fil minus1fil\relax
        \hskip.5\tabcolsep}
\newcommand{\cdashlinelr}[1]{%
  \noalign{\vskip\aboverulesep
           \global\let\@dashdrawstore\adl@draw
           \global\let\adl@draw\adl@drawiv}
  \cdashline{#1}
  \noalign{\global\let\adl@draw\@dashdrawstore
           \vskip\belowrulesep}}
\begin{document}
\title{Improving Offline Contextual Bandits with Distributional Robustness}

\author{Otmane Sakhi}
\authornote{Equal contribution.}
\email{o.sakhi@criteo.com}
\author{Louis Faury}
\authornotemark[1]
\email{l.faury@criteo.com}
\author{Flavian Vasile}
\affiliation{%
  \institution{Criteo AI Lab}
  \city{Paris}
  \country{France.}
}

\email{}

\begin{abstract}
This paper extends the Distributionally Robust Optimization (DRO) approach for offline contextual bandits laid out in \cite{faury20distributionally}. Specifically, we leverage this framework to introduce a \emph{convex} reformulation of the Counterfactual Risk Minimization principle introduced in \cite{swaminathan2015batch}. Besides relying on convex programs, our approach is compatible with stochastic optimization, and can therefore be readily adapted to the large data regime. Our approach relies on the construction of asymptotic confidence intervals for offline contextual bandits through the DRO framework. By leveraging known asymptotic results of robust estimators, we also show how to automatically calibrate such confidence intervals, which in turn removes the burden of hyper-parameter selection for policy optimization. We present preliminary empirical results supporting the effectiveness of our approach.
\end{abstract}

%
%

\maketitle

\section{Introduction}
\paragraph{Contextual Bandits.} The Contextual Bandit (CB) framework is a formalization of an important sequential decision making problem, with notorious applications in recommender systems \citep{li2010contextual,valko2014spectral}, mobile health \citep{tewari2017ads} and clinical trials \citep{villar2015multi}. It describes a repeated game between a decision-maker and an environment. The latter sequentially reveal sets of available actions to the former, along with some additional side information (or \emph{context}). Such additional information is assumed to carry informative signal about the intrinsic values (or \emph{rewards}) of the actions. Informally, the goal of the decision-maker is to discover an efficient strategy (or \emph{policy}) to select, given a context, a nearly optimal action.
   
\paragraph{Batch Learning from Bandit Feedback.} The CB optimization literature can be divided into two streams. The first studies its \emph{online} formulation, where the focus lies on the exploration/exploitation trade-off for \emph{regret} minimization. This paper is concerned with the second, known as Batch Learning from Bandit Feedback \citep{swaminathan2015batch} (BLBF) and arguably better suited for applications in real-life situations. The optimization is performed \emph{offline} and based on historical data, typically obtained by logging the interactions between an older version of the current policy and the environment. The learning problem consists in leveraging this data (necessarily biased towards actions favored by the logged policy) to discover new strategies of greater performance.
    
\paragraph{Prior work and limitations.} The first step in addressing the BLBF learning problem is to remove the intrinsic bias introduced by the logging policy \citep{bottou2015counterfactual}. This however can come at the price of building high-variance estimates \citep{swaminathan2015batch} for the performance of the current policy, which in turns can lead to high \emph{post-decision} regret. To address such challenges, the authors of \cite{swaminathan2015batch} introduced the Counterfactual Risk Minimization (CRM) principle. It combines debiasing through importance re-weighting \cite{swaminathan2015batch} with a modified policy-selection process that penalizes policies with high-variance estimates. Recently, \citep{faury20distributionally} proposed a generalization of the CRM principle through the Distributionally Robust Optimization (DRO) framework. This led them to the development of a new BLBF algorithm, obtained through a specialization of this general framework . However, while the methods introduced in both \citep{swaminathan2015batch} and \citep{faury20distributionally} offer some desirable theoretical guarantees for off-line policy optimization, their respective implementation suffer from important caveats. Namely, they rely on optimizing \emph{non-convex} objectives which is notably hard from a theoretical perspective. Further, these objective are not well-suited for stochastic optimization (useful when the logged data is large and cannot fit in memory), as obtaining unbiased stochastic gradients for these objective is not straight-forward. Finally, they rely on the selection of rather sensitive hyper-parameters, of which the (approximate) automatic calibration (through asymptotic arguments, for instance) is unknown.

\paragraph{Contributions.} In this paper, we further investigate the DRO framework for offline CB introduced in \citep{faury20distributionally}. This leads to a reformulation of the CRM principle that boils down to solving a convex problem. Further, we exploit some recent results from the stochastic optimization literature \citep{namkoong2016stochastic} to show how to efficiently solve this objective for large logged datasets. Our approach relies on the construction of asymptotic confidence intervals for offline CB through the DRO framework. Leveraging known asymptotic results for DRO we show how to automatically calibrate such confidence intervals, which in turns remove the need for hyper-parameter optimization for policy optimization. We validate our approach through extensive simulations on standard datasets for this task.

\section{Preliminaries}
\paragraph{Notations}In the following $\varphi$ is a real-valued convex function. The notation $d_\varphi$ refers to
the f-divergence associated to $\varphi$, and we denote $\varphi^\star(s)=\sup_{x\in\mbb{R}}(xs-\varphi(x))$ the Fenchel conjugate of $\varphi$. We will write $1_n$ to be the $n$-dimensional vector which entries are all equal to $1/n$. For
any positive integer $m$, $\Delta_m$ denotes the $m$-dimensional simplex.

\paragraph{Setting}
In the following, we will use $x\in\mcal{X}$ to denote a context and $a\in[K]$ an action, where $K$ denotes the number of available actions. Given a context $x$, each action is associated with a cost\footnote{We make this assumption for ease of exposition. It can be explicitly enforced by re-scaling the cost function.} $c(x,a)\in[-1,0]$, with the convention that better actions have smaller cost. The cost function $c$ is unknown. A decision maker is represented by its policy $\pi$ which maps each context $x\in\mcal{X}$ to $\Delta_K$. Assuming that the contexts are stochastic and follow a unknown distribution $\nu$, we define the \emph{risk} of the policy $\pi$ as the expected cost one suffers when playing actions according to $\pi$:
    \begin{align*}
        \risk{\pi} = \mbb{E}_{x\sim\nu, a\sim\pi(x)}\left[c(x,a)\right].
    \end{align*}
    The learning problem is to find a policy $\pi$ with smallest risk. In most real world problems, it is not reasonable to expect having the luxury of testing out several policies to compare their empirical risk and retain whichever policy has the smallest. This issue is usually circumvented by forecasting the risk of a given policy thanks to some existing interaction data. This is formalized through a \emph{logging policy} $\pi_0$ which has already been deployed in the environment (e.g a previous version of a recommender system that the practitioner is trying to improved) for which we assume we have the history $\mcal{H}_n= \left\{x_i,a_i\sim\pi_0(x_i), \pi_0(x_i,a_i), c(a_i,x_i)\right\}_{i\in[n]}$.
    Based on this data, one can build an unbiased (under mild assumptions) estimator of the risk of any policy $\pi$ through the use of importance weights \citep{rosenbaum1983central}:
    \begin{align*}
        \ipsrisk{\pi} := \frac{1}{n}\sum_{i=1}^n \omega_\pi(x_i,a_i)c(x_i,a_i) \quad \text{where }\quad w_\pi(x,a):= \pi(x,a)/\pi_0(x,a),
    \end{align*}
    commonly referred to as the IPS (Inverse Propensity Scoring) risk. 

\paragraph{Counterfactual Risk Minimization}
Unfortunately, the IPS estimator has potentially high variance,  depending on the disparity between $\pi$ and $\pi_0$ \citep[Section 4]{swaminathan2015batch}. Hence, directly sorting candidate policies thanks to their IPS risk is hazardous (as it boils down to comparing estimators with potentially high and different variances) and is known to be sub-optimal. To avoid this caveat, \cite{swaminathan2015batch} proposed to add an \emph{empirical variance} term to the IPS risk in order to penalize policies with high-variance estimates. Coined Counterfactual Risk Minimization (CRM), this principle suggest optimizing for the policy which minimizes:
    \begin{align}
    \label{eq:risk_crm}
        \poemrisk{\pi}= \ipsrisk{\pi} + \lambda\sqrt{\frac{\widehat{\text{Var}}_n(\pi)}{n}}\, ,
    \end{align}
    where $\lambda$ is a tunable hyper-parameter and $\widehat{\text{Var}}_n(\pi)=\frac{1}{n-1}\sum_{i=1}^n \Big(\omega_\pi(x_i,a_i)c(x_i,a_i)-\ipsrisk{\pi}\Big)^2$ is the empirical variance of $\ipsrisk{\pi}$. This policy selection process is based on \emph{variance-sensitive} confidence intervals for the true risk obtained via empirical Bernstein bounds \citep{maurer2009empirical}.

\paragraph{Generalization through DRO}
    Based on a similar intuition, \cite{faury20distributionally} recently introduced the idea of using Distributionally Robust Optimization (DRO) tools for this policy optimization problem. Formally, they showed that for a particular class of $\varphi$-divergence, the robust risk:
    \begin{align}
       \robustrisk{\varphi}{\pi}{\epsilon}:= \sup_{q\in\Delta_n}\left\{\sum_{i=1}^n q_i\omega_\pi(x_i,a_i)c(x_i,a_i)\quad \text{s.t} \quad d_\varphi(q,1_n)\leq \epsilon\right\}
       \label{eq:robust_risk}
    \end{align}
    is a  variance-sensitive (asymptotic) upper-bound for the true risk. It is therefore well-suited for the policy optimization task, and actually generalizes (in some sense) the CRM approach \citep[Lemma 3]{faury20distributionally}.
    For the KL-divergence, the robust risk has a closed-formed which can be directly minimized \cite[Lemma 4]{faury20distributionally}:
    \begin{align*}
     \robustrisk{\text{KL}}{\pi}{\epsilon}= \sum_{i=1}^n \frac{\exp(\omega_\pi(x_i,a_i)c(x_i,a_i)/\gamma)}{\sum_j \exp(\omega_\pi(x_j,a_j)c(x_j,a_j)/\gamma)}\omega_\pi(x_i,a_i)c(x_i,a_i)
    \end{align*}
 with  $\gamma$ being a tunable hyper-parameter. 

\paragraph{Limitations and contributions}
The variance penalization of the IPS objective and its DRO generalization benefit from solid theoretical justifications, and result in better policy optimization algorithms. These algorithms, that were proven to outperform the simple IPS objective \cite{swaminathan2015batch, faury20distributionally} can still be improved as they suffer from important limitations: \textbf{(1)} Contrary to the IPS objective, which is linear (and therefore convex) in $\pi$, both the initial CRM objective (adding a square root variance penalization) and the DRO objective (as it was introduced in \cite{faury20distributionally}) break this convexity. This results in ill-posed optimization programs, which potentially hinders the statistical benefits brought by such methods. \textbf{(2)} Another important limitation of such objectives is their scalability to large datasets. Both formulations are not well-adapted to stochastic gradient descent algorithms, as obtaining unbiased stochastic gradients of their related objectives is not-straightforward. For instance, the algorithm introduced in \cite{faury20distributionally} works only in the batch setting as the adversary distribution needs all the data to be normalized. \cite{swaminathan2015batch} suggested a relaxation of the CRM objective amenable to stochastic gradients, however only applicable in the case of exponential policies. Their approach consists in a majorization-minimization strategy, and still requires to go through the whole logged dataset once in a while. \textbf{(3)} These algorithms also come with hyper-parameters that need careful tuning as their choice drastically impact the performance of the obtained policy, making the whole optimization procedure even harder. \cite{swaminathan2015batch} treats $\lambda$, the weight of the variance penalty, as a hyper-parameter, while \cite{faury20distributionally} treat $\epsilon$, the maximum distance between the adversarial and the nominal distribution as a hyper-parameter as well.
To sum-up, both algorithms are deemed rather impractical in real life, as they have a non convex loss surface to optimize, are not applicable to huge datasets and need consequent hyper-parameter tuning. This work tries to circumvent these limitations through a more careful treatment of the DRO formulation, leading to general algorithms that treat all these caveats in a well defined and unified framework.

\section{Policy Evaluation and Optimization}
\subsection{Policy Evaluation: Confidence Intervals}

\begin{table}[t]
    \centering
    \begin{tabular}{|c|c|c|c|}
    \hline
         \textbf{Divergence} & $\pmb{\varphi(t)}$ & $\pmb{D_\varphi(q\vert\vert p)}$ & $\pmb{\varphi^*(s)}$ \\
         \hline
         Chi-Square & $(t-1)^2$& $\sum_{i=1}^n \frac{(q_i-p_i)^2}{p_i}$ & $\left\{\begin{aligned}
             &s + s^2/4 \quad &s\geq -2\\
             &-1 \quad &s\leq -2
             \end{aligned}
         \right.$\\
         \hline
         Kullback-Leibler & $t\log t -t +1$ & $\sum_{i=1}^n q_i\log(q_i/p_i)$& $e^s-1$ \\
         \hline
         Burg entropy & $-\log t + t -1$& $\sum_{i=1}^n p_i\log(p_i/q_i)$ & $-\log(1-s)$, $s<1$ \\
         \hline
         Hellinger distance & $(\sqrt{t}-1)^2$& $\sum_{i=1}^n \left(\sqrt{p_i}-\sqrt{q_i}\right)^2$ & $\frac{s}{1-s}$, $s\leq 1$  \\
         \hline
    \end{tabular}
    \caption{Some coherent $\varphi$-divergences and their characterizations.}
    \label{tab:phi_divs}
\end{table}

In this section, we briefly review and discuss how the robust risk can lead to the construction of confidence intervals for the true risk. This is a crucial step for policy optimization, as the latter will consist in minimizing the high-probability upper-bound on the true risk provided by the policy evaluation procedure. Designing tight confidence interval for the risk can also be a goal in itself, in order to fully evaluate the potential benefits/risks of deploying a given policy (e.g provide an offline metric for A/B testing). In the following, we follow \cite{faury20distributionally} and consider \emph{coherent} $\varphi$-divergence - i.e we will assume that $\varphi$ satisfies the conditions of Assumption 1 in \cite{faury20distributionally}. We provide some examples of such functions (and their associated divergence measure) in Table
   ~\ref{tab:phi_divs}. 
   
Under such conditions, one can show that the robust risk accounts for the variance of the IPS risk estimator \cite[Lemma 2]{faury20distributionally}. Further, DRO can be further leveraged to build asymptotic confidence intervals for the robust-risk. Indeed, let us introduce the following problem, converse to \eqref{eq:robust_risk}:
   \begin{align}
    \optimisticrisk{\varphi}{\pi}{\epsilon} := \inf_{q\in\Delta_n}\left\{\sum_{i=1}^n q_i\omega_\pi(x_i,a_i)c(x_i,a_i) \quad \text{s.t} \quad d_\varphi(q,1_n)\leq \epsilon\right\}.
    \label{eq:optimistic_risk}
   \end{align}
   We have the following result, extending Lemma 1 of \cite{faury20distributionally} and easily extracted from \cite{duchi2016statistics}. 
   \begin{restatable}{prop}{propci}[Asymptotic Confidence Interval]
   \label{prop:propci}
    Let $\delta\in[0,1)$. For $\alpha\in(0,1)$ denote $\rho_\alpha$ the $(1-\alpha)$-quantile of the one-dimensional $\chi^2$ distribution. Then:
    \begin{align}
    \label{eq:conf_interval}
        \lim_{n\to\infty} \mathbb{P}\left(\optimisticrisk{\varphi}{\pi}{\frac{\rho_{\delta}}{n}} \leq \risk{\pi} \leq \robustrisk{\varphi}{\pi}{\frac{\rho_{\delta}}{n}}\right) \geq 1-\delta\, .
    \end{align}
   \end{restatable}
   This result states that the interval $[\optimisticrisk{\varphi}{\pi}{\epsilon},\robustrisk{\varphi}{\pi}{\epsilon}]$ is a asymptotic $(1-\delta)$ confidence interval for the true risk, when the size of the ambiguity-set $\epsilon$ is set to $\rho_{\delta}/n$. We will show in Section~\ref{sec:exp_pe} that despite being asymptotic, this interval is empirically \emph{tight} and displays satisfying coverage, motivating its use in real-life applications. It turns out that the programs for computing the robust and optimistic risk  (Equation~\eqref{eq:robust_risk} and \eqref{eq:optimistic_risk}, respectively) can be efficiently solved. We will only review here the computation of the robust risk, however a similar reasoning holds for the optimistic risk. Notice that the objective in Equation~\eqref{eq:robust_risk} is linear in the variable $q$, which acts as a re-weighting for the counterfactual costs. Further, the constraint set $\{q\in\Delta_n \,|\, d_\varphi(q,1_n)\leq \epsilon\}$ is convex. The program is therefore \emph{convex} and can henceforth be solved efficiently. In this paper, we will rely on its dual formulation, which is easily solvable and well-adapted to the stochastic setting. Formally, we rely on the following result to characterize the robust risk.
   \vspace{-10pt}
    \begin{restatable}{lemma}{lemmadual}[Dual program for the robust risk]
    \label{lemma:lemmadual}
        Let:
        \begin{align}
        \label{eq:defg}
            g_\pi(\beta,\gamma) = \beta + \gamma\epsilon + \frac{1}{n}\sum_{s=1}^{n}(\gamma\varphi)^\star\left(\omega_\pi(x_i,a_i)c(a_i,x_i)-\beta\right)\, ,
        \end{align}
        where $(\gamma\varphi)^\star(s)=\gamma\varphi^\star(s/\gamma)$, with the convention that $(0\varphi)^\star(s)=+\infty$ is $s>0$ and $0$ otherwise. The function $(\pi,\beta,\gamma)\to g_\pi(\beta,\gamma)$ is convex and:
        \begin{align}
        \label{eq:dro_pe}
           \robustrisk{\varphi}{\pi}{\epsilon} = \inf_{\beta,\gamma\geq 0} g_\pi(\beta,\gamma)\, .
           \tag{\textbf{DRO-PE}}
        \end{align}
    \end{restatable}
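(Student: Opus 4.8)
The plan is to recognize \eqref{eq:robust_risk} as a convex program in $q$ and to derive \eqref{eq:dro_pe} as its Lagrangian dual, with the scalars $\beta$ and $\gamma$ playing the role of the multipliers attached respectively to the normalization constraint $\sum_i q_i = 1$ and to the divergence constraint $d_\varphi(q,1_n)\leq\epsilon$. Writing $z_i := \omega_\pi(x_i,a_i)c(x_i,a_i)$ and recalling that $d_\varphi(q,1_n)=\frac1n\sum_i\varphi(nq_i)$ for the nominal distribution $1_n$, I would form the Lagrangian
\begin{equation*}
L(q,\beta,\gamma) = \sum_{i=1}^n q_i z_i - \gamma\Big(\tfrac1n\sum_{i=1}^n\varphi(nq_i)-\epsilon\Big) + \beta\Big(1-\sum_{i=1}^n q_i\Big),
\end{equation*}
keeping the nonnegativity $q\geq 0$ implicit, as it is absorbed into the effective domain of $\varphi$ (coherent divergences set $\varphi\equiv+\infty$ on the negative half-line, so $\varphi^\star$ automatically enforces $q_i\geq 0$). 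The robust risk is then $\sup_q\inf_{\gamma\geq0,\beta}L$, and the crux is to exchange the two operators.

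First I would carry out the inner maximization over $q$ for fixed $(\beta,\gamma)$ with $\gamma>0$. The objective separates across coordinates, and after the substitution $u_i=nq_i$ each coordinate contributes
\begin{equation*}
\sup_{u_i}\Big\{\tfrac1n u_i(z_i-\beta)-\tfrac{\gamma}{n}\varphi(u_i)\Big\} = \tfrac1n(\gamma\varphi)^\star(z_i-\beta),
\end{equation*}
where $(\gamma\varphi)^\star(s)=\gamma\varphi^\star(s/\gamma)$ by the standard scaling identity for conjugates. Collecting the constant terms $\beta+\gamma\epsilon$ reproduces exactly $g_\pi(\beta,\gamma)$, so that $\sup_q L(q,\beta,\gamma)=g_\pi(\beta,\gamma)$ and the dual problem is $\inf_{\gamma\geq0,\beta}g_\pi(\beta,\gamma)$.

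To legitimize the sup--inf exchange I would invoke strong duality via Slater's condition: assuming $\epsilon>0$, the nominal point $q=1_n$ is strictly feasible since $d_\varphi(1_n,1_n)=\varphi(1)=0<\epsilon$, while the objective is linear and the feasible set convex and compact. This yields $\sup_q\inf L=\inf\sup_q L$ and hence \eqref{eq:dro_pe}. Some care is needed at the boundary $\gamma=0$: I would verify that the stated convention $(0\varphi)^\star(s)=+\infty$ for $s>0$ and $0$ otherwise is precisely the limit $\lim_{\gamma\to0^+}\gamma\varphi^\star(s/\gamma)$ (the recession function of $\varphi^\star$), so that including $\gamma=0$ in the infimum alters neither its value nor the convexity structure.

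Finally, for the joint convexity claim I would note that $\beta+\gamma\epsilon$ is affine and that $z_i-\beta=\omega_\pi(x_i,a_i)c(x_i,a_i)-\beta$ is affine in $(\pi,\beta)$, since $\omega_\pi$ is linear in $\pi$. The essential observation is that $(\gamma,s)\mapsto(\gamma\varphi)^\star(s)=\gamma\varphi^\star(s/\gamma)$ is the \emph{perspective} of the convex function $\varphi^\star$, hence jointly convex on $\gamma>0$ (with the closed extension to $\gamma=0$ above). Composing this jointly convex map with the affine map $(\pi,\beta,\gamma)\mapsto(\gamma,z_i-\beta)$ and summing over $i$ preserves convexity, giving joint convexity of $g_\pi$. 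I expect the main obstacle to be not the duality computation itself but the careful handling of the $\gamma=0$ boundary --- confirming that the convention for $(0\varphi)^\star$ keeps $g_\pi$ lower semicontinuous and jointly convex and introduces no duality gap --- together with checking that the nonnegativity of $q$ is correctly encoded by the domain of $\varphi^\star$ rather than needing its own multiplier.
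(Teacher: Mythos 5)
Your proposal is correct and follows essentially the same route as the paper's proof: Lagrangian duality with multipliers $\beta$ and $\gamma$, Slater's condition via the strictly feasible point $q=1_n$ when $\epsilon>0$, a separable per-coordinate maximization yielding the conjugate $(\gamma\varphi)^\star$, and a check of the $\gamma=0$ boundary. Your convexity argument via the perspective of $\varphi^\star$ composed with an affine map is one of the two justifications the paper itself offers, so there is nothing further to reconcile.
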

    This robust program characterization can be extracted from more general results - see for instance \citep[Section 4]{ben2013robust}. We provide a detailed proof in Appendix~\ref{app:prooflemmadual} for the sake of completeness. In a few words, Lemma~\ref{lemma:lemmadual} states the the robust risk can be efficiently computed by solving a two-dimensional convex program. When $n$ is reasonably small (in other words, when the dataset $\mcal{H}_n$ fits in memory), coordinate descent (with exact line search) or two-dimensional bisection provide efficient, principled tools for computing the robust risk. The program \eqref{eq:dro_pe} is also well-suited for the large-data regime (\emph{e.g} large $n$) as it naturally adapts to stochastic optimization. Indeed, the function $g$ (Equation \eqref{eq:defg}) is composite and unbiased gradients of this objective are easily obtainable. Stochastic gradient descent methods (see \citep{ruder2016overview} for a modern overview) therefore provide efficient and flexible solutions for this problem (up to some mild modifications to account for the fact that the $g$ is not smooth for $\gamma$ in a neighborhood of $0$).

To sum-up, we showed here how the DRO method could be used to build confidence intervals for the true risk, by simply relying on solving convex programs. This confidence intervals are however asymptotic; we will show in Section~\ref{sec:exps} that, still, they provide sufficient coverage, while being much tighter than their finite-time counterparts. 
    
\subsection{Policy Optimization: Towards a Convex Objective}
\subsubsection{General principle}
The CRM principle casts policy optimization in a theoretically sound framework, and interestingly enough is closely related to the robust risk defined throughout the paper.
Relying on \cite{duchi2016statistics}, \cite{faury20distributionally} namely showed that the robust risk provides an asymptotic approximation to the variance regularized empirical risk.
 The original CRM principle for policy optimization (Equation~\eqref{eq:risk_crm}) can therefore be rethought as the minimization of a $\varphi$-robust risk. Using the dual formulation of the robust risk given in Equation~\eqref{eq:dro_pe}, the policy optimization objective becomes:
\begin{align}
\label{eq:dro_po}
\inf_{\pi} \robustrisk{\varphi}{\pi}{\epsilon} = \inf_{\pi, \beta,\gamma\geq 0} g_\pi(\beta,\gamma)\, .
\tag{\textbf{DRO-PO}}
\end{align}

Note that as a consequence of Lemma~\ref{lemma:lemmadual}, this objective is \emph{convex}. It can therefore be minimized in principled ways, while enjoying similar guarantees as the original CRM objective. Intuitively, one can expect such an important transformation of the optimization properties of the policy improvement objective to lead to greater practical performances. The most natural way to solve the policy improvement objective \eqref{eq:dro_po} is through plain gradient descent. Indeed, a valid strategy consists in feeding gradients of the function $(\beta, \gamma,\pi)\to g_\pi(\beta,\gamma)$ to a gradient optimizer. In our experiments, we found that applying the L-BFGS solver to this dual program to work best. As for the policy evaluation, the policy optimization objective \eqref{eq:dro_po} is particularly adapted when the historic data is large (i.e $n\gg 1$) and only stochastic gradients can be obtained. For stochastic optimization, stochastic gradients methods can encounter some issues (linked to the possible unboundedness of the gradients) which can be alleviate thanks to specialized methods \cite{namkoong2016stochastic}. 

    \subsubsection{Extensions}
    In the following, we discuss how different estimators can be used and robustified in the same way as the IPS, for improved performances and without sacrificing convexity.
    
    \paragraph{Variance Reduction} The methods presented so far rely on vanilla IPS. It is well known that this estimator suffers from large variance which can lead to poor performances - whatever the policy optimization algorithm used. Fortunately, our method easily extend to other estimators, so long that they remain convex in $\pi$. Still, it would be useful to extend our method to estimators that actively reduce variance. A candidate for this task is the self-normalized importance sampling estimator of \citep{snips}. This estimator is unfortunately not convex in $\pi$, which goes against the efforts undertaken in this paper to maintain well-behaved optimization tasks. We provide here an alternative which uses a simple additive control variate (instead of a multiplicative one). Formally, we rely on the following estimator:
    \begin{align*}
        \text{Risk}_{n,\rho}(\pi) = \frac{1}{n}\sum_{i=1}^n \left(c(x_i,a_i)-\rho\right)\omega_\pi(x_i,a_i) + \rho \, .
    \end{align*}
    A robust version of this estimator easily follows, and enjoys the same convex properties of the IPS robust risk. The variance-reduction property of the additive control variate is presented in the following Lemma. 
    
     \begin{restatable}{lemma}{lemmaestimator}[Propensity weights as an additive control variate]
    \label{lemma:lemmaestimator}
        For all $\rho$, $\textnormal{Risk}_{n,\rho}(\pi)$  is an unbiased estimator of $\risk{\pi}$, achieving a better variance than naive IPS whenever $0 \le \rho \le 2\frac{\text{Cov}(\ell_\pi, \omega_\pi)}{\mathbb{\mathbf{V}}(\omega_\pi)}$. 
        In addition, if the cost is independent of the propensity weights, we obtain $\rho^* = \E[c]$.
    \end{restatable}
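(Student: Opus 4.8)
The plan is to exploit the additive control-variate structure of $\text{Risk}_{n,\rho}$. Writing each per-sample summand as
\[
\left(c(x_i,a_i)-\rho\right)\omega_\pi(x_i,a_i)+\rho = \ell_\pi(x_i,a_i) - \rho\left(\omega_\pi(x_i,a_i)-1\right),
\]
where $\ell_\pi := c\,\omega_\pi$ is the per-sample IPS loss, exhibits the estimator as the IPS risk corrected by a mean-zero term. The two facts that drive everything are $\E[\omega_\pi]=1$ (the importance weights integrate to one, since $\E_{x\sim\nu,a\sim\pi_0}\left[\pi(x,a)/\pi_0(x,a)\right]=\E_{x\sim\nu}\sum_a\pi(x,a)=1$) and $\E[\ell_\pi]=\risk{\pi}$ (unbiasedness of vanilla IPS). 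For the first claim I would simply take expectations over the i.i.d. draws $x_i\sim\nu,\ a_i\sim\pi_0(x_i)$ and use linearity: $\E[\text{Risk}_{n,\rho}(\pi)] = \E[\ell_\pi] - \rho\left(\E[\omega_\pi]-1\right) = \risk{\pi}-\rho\cdot 0 = \risk{\pi}$, which holds for every $\rho$ and requires no assumption.

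For the variance analysis, since the samples are i.i.d. the estimator's variance equals $1/n$ times the single-sample variance, so it suffices to study $\mathbb{\mathbf{V}}(\ell_\pi-\rho\,\omega_\pi)$ (the additive constant $+\rho$ and the shift by $\rho\cdot 1$ do not affect variance). Expanding the quadratic in $\rho$ gives
\[
\mathbb{\mathbf{V}}(\ell_\pi) - 2\rho\,\text{Cov}(\ell_\pi,\omega_\pi) + \rho^2\,\mathbb{\mathbf{V}}(\omega_\pi),
\]
whose value at $\rho=0$ is exactly the naive IPS variance $\mathbb{\mathbf{V}}(\ell_\pi)$. The improvement term $\rho\big(\rho\,\mathbb{\mathbf{V}}(\omega_\pi)-2\,\text{Cov}(\ell_\pi,\omega_\pi)\big)$ is an upward-opening parabola in $\rho$ vanishing at $\rho=0$ and at $\rho=2\,\text{Cov}(\ell_\pi,\omega_\pi)/\mathbb{\mathbf{V}}(\omega_\pi)$, hence strictly negative between these two roots; this is precisely the stated range over which the variance strictly improves. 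Setting the derivative of the quadratic to zero then yields the optimal shift $\rho^\star = \text{Cov}(\ell_\pi,\omega_\pi)/\mathbb{\mathbf{V}}(\omega_\pi)$.

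Finally, for the last claim I would specialize $\rho^\star$ under $c\indep\omega_\pi$. Writing $\text{Cov}(\ell_\pi,\omega_\pi)=\text{Cov}(c\,\omega_\pi,\omega_\pi)=\E[c\,\omega_\pi^2]-\E[c\,\omega_\pi]\E[\omega_\pi]$ and using independence together with $\E[\omega_\pi]=1$, the first term factors as $\E[c]\,\E[\omega_\pi^2]$ and the second as $\E[c]\,\E[\omega_\pi]^2=\E[c]$, so $\text{Cov}(\ell_\pi,\omega_\pi)=\E[c]\big(\E[\omega_\pi^2]-1\big)=\E[c]\,\mathbb{\mathbf{V}}(\omega_\pi)$; dividing by $\mathbb{\mathbf{V}}(\omega_\pi)$ leaves $\rho^\star=\E[c]$. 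The whole argument is elementary; the only points demanding care are the clean identification of the per-sample random variables $\ell_\pi$ and $\omega_\pi$ and the repeated use of $\E[\omega_\pi]=1$, which is exactly what renders the additive correction mean-zero and thus keeps the estimator unbiased for all $\rho$.
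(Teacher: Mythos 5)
Your proof is correct in all three parts: the unbiasedness for every $\rho$ via $\E[\omega_\pi]=1$, the quadratic-in-$\rho$ variance expansion with the improvement interval between the roots $0$ and $2\,\text{Cov}(\ell_\pi,\omega_\pi)/\mathbb{\mathbf{V}}(\omega_\pi)$, and the specialization $\rho^\star=\E[c]$ under independence using $\mathbb{\mathbf{V}}(\omega_\pi)=\E[\omega_\pi^2]-1$. There is, however, nothing in the paper to compare it against: the appendix proves only Lemma~\ref{lemma:lemmadual} (the dual program for the robust risk), and no proof of Lemma~\ref{lemma:lemmaestimator} is given anywhere in the manuscript. Your argument is the standard control-variate computation that the statement clearly presupposes, and it fills that omission cleanly; the only cosmetic caveats are that at the two endpoints of the stated interval the variance is merely equal to (not strictly better than) that of naive IPS, and that the interval as written is meaningful only when $\text{Cov}(\ell_\pi,\omega_\pi)\geq 0$ --- both consistent with the lemma's use of non-strict inequalities, and worth a one-line remark in a polished write-up.
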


In practice, we do not know how to derive $\rho^*$ analytically, however one can directly use the cost's empirical mean under $\pi_0$.
    
\paragraph{Parametric Policies.} In practice, the actions/contexts space is extremely large and directly optimizing the objective with respect to the policy (as a $\mathbb{R}^{\vert \mcal{X}\vert\times K}$ matrix) is unreasonable. In such cases, policies are parametrized to drastically reduce the complexity of the problem. This usually breaks convexity, even in the simplest case of log-linear policies - that is, policies of the form $\pi_\theta(a|x) \propto \exp(\theta^T f(x,a))$ for $f(x,a)$ a given joint feature map. In this case, the objective becomes a negative sum of log-concave functions resulting in a non-convex optimization surface. Following \citep{logtrick}, one can bypass this non-convexity by constructing a \emph{tight} convex upper bound of the original objective.
     \begin{restatable}{lemma}{lemmaelogtrick}[Convex upper-bound for log-concave policies]
    \label{lemma:lemmalogtrick} 
     Let $\pi_\theta$ be a log-concave (w.r.t $\theta$) policy. For a given $\theta_0$, let:
     \begin{align*}
         \textnormal{Risk}_{n}^{\text{up}}(\pi_\theta) = \frac{1}{n}\sum_{i=1}^{n}\frac{\pi_{\theta_0}(a_i,x_i)}{\pi_0(a_i,x_i)}(1 + \log[\frac{\pi_{\theta}(a_i,x_i)}{\pi_{\theta_0}(a_i,x_i)}])c(a_i,x_i)\, .
     \end{align*}
     $\textnormal{Risk}_{n}^{\text{up}}(\pi_\theta)$ is a \emph{convex} upper bound of the IPS risk. The closer $\theta_0$ to $\theta$, the tighter the upper bound, with equality at $\theta_0 = \theta$.
    \end{restatable}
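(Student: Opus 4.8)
The plan is to reduce the whole statement to the elementary inequality $\log u \le u - 1$, valid for every $u > 0$ with equality exactly at $u = 1$, applied term by term. First I would set $u_i := \pi_\theta(a_i,x_i)/\pi_{\theta_0}(a_i,x_i) > 0$ and rewrite the bracketed factor in $\textnormal{Risk}_n^{\text{up}}(\pi_\theta)$ as $1 + \log u_i$, so that the log inequality reads $1 + \log u_i \le u_i$. Multiplying by the strictly positive weight $\pi_{\theta_0}(a_i,x_i)/\pi_0(a_i,x_i)$ keeps the direction and gives $\tfrac{\pi_{\theta_0}(a_i,x_i)}{\pi_0(a_i,x_i)}(1 + \log u_i) \le \tfrac{\pi_\theta(a_i,x_i)}{\pi_0(a_i,x_i)}$, whose right-hand side is exactly the $i$-th IPS summand.

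Next I would exploit the sign of the cost. Since $c(a_i,x_i) \le 0$ by assumption, multiplying the last inequality by $c(a_i,x_i)$ reverses it, so each summand of $\textnormal{Risk}_n^{\text{up}}$ dominates the corresponding IPS summand; averaging over $i \in [n]$ yields $\textnormal{Risk}_n^{\text{up}}(\pi_\theta) \ge \tfrac{1}{n}\sum_{i=1}^n \tfrac{\pi_\theta(a_i,x_i)}{\pi_0(a_i,x_i)} c(a_i,x_i)$, which is the claimed upper-bound property.

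For convexity I would expand $1 + \log u_i = 1 - \log \pi_{\theta_0}(a_i,x_i) + \log \pi_\theta(a_i,x_i)$. As $\theta_0$ is fixed, the first two terms are constants, so each summand equals a $\theta$-independent constant plus $\tfrac{\pi_{\theta_0}(a_i,x_i)}{\pi_0(a_i,x_i)} c(a_i,x_i)\,\log \pi_\theta(a_i,x_i)$. The scalar coefficient here is non-positive (a positive ratio times a non-positive cost), while $\theta \mapsto \log \pi_\theta(a_i,x_i)$ is concave by the log-concavity hypothesis. A non-positive multiple of a concave function is convex, and adding constants and summing over $i$ preserves convexity, so $\textnormal{Risk}_n^{\text{up}}(\pi_\theta)$ is convex in $\theta$.

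Finally, for tightness I would compute the per-term gap between the bound and the IPS risk, namely $\tfrac{\pi_{\theta_0}(a_i,x_i)}{\pi_0(a_i,x_i)}\bigl(-c(a_i,x_i)\bigr)\bigl(u_i - 1 - \log u_i\bigr) \ge 0$, a nonnegative Bregman-type quantity that vanishes at $u_i = 1$ and grows as $u_i$ departs from $1$. At $\theta_0 = \theta$ we have $u_i = 1$ for all $i$, so the bound coincides with the IPS risk; as $\theta_0 \to \theta$, continuity gives $u_i \to 1$ and each gap tends to $0$, which formalizes the assertion that the closer $\theta_0$ is to $\theta$, the tighter the bound. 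The only real subtlety, and the single place where a sign slip would be fatal, is the bookkeeping around the non-positive cost: the same assumption $c \le 0$ is responsible both for the inequality pointing in the upper-bound direction and for converting the concave $\log \pi_\theta$ into a convex contribution. I would therefore state the cost range explicitly and track the two sign reversals with care; everything else is routine.
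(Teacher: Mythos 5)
Your proof is correct and complete. The paper itself never proves this lemma (it defers to the cited reference, and the appendix only proves Lemma~\ref{lemma:lemmadual}), but your argument is exactly the standard one underlying the claim: the elementary inequality $\log u \le u-1$ applied with $u_i = \pi_\theta(a_i,x_i)/\pi_{\theta_0}(a_i,x_i)$, with the non-positive costs $c(x,a)\in[-1,0]$ correctly accounting for both sign reversals — turning the pointwise minorization into an upper bound on the risk, and turning the concave functions $\theta \mapsto \log \pi_\theta(a_i,x_i)$ into convex contributions — and with the equality case at $\theta_0=\theta$ and the nonnegative per-term gap $\frac{\pi_{\theta_0}}{\pi_0}(-c_i)(u_i - 1 - \log u_i)$ giving the tightness claim.
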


We can use Lemma \ref{lemma:lemmalogtrick} to obtain a proxy of our initial objective, building on an iterative procedure that only uses convex losses throughout the whole optimization process. Once again, we can build a robust version of this estimator which can be efficiently optimized. Note that here, the robust estimator will be convex w.r.t the parametrization $\theta$ as soon as the policy in log-concave.

\section{Preliminary Results}
\label{sec:exps}

We here describe some preliminary experimental results, backing up the idea that an improved optimization landscape for policy optimization naturally leads to improved practical performances. We work with the four $\varphi$-divergence presented in Table~\ref{tab:phi_divs}. We employ the classical supervised to bandit conversion \citep{agarwal2014taming}. Formally, denote $x\in\mathcal{X}$ a given input vector and $y\in\{0,1\}^L$ its label, and $\mathcal{D}
^\star= \{(x_1,t_1),\hdots,(x_m,t_m)\}
$ a given multi-label dataset. We create the logging policy by training it on a fraction of $\mathcal{D}^\star$. We then create the historic data $\mathcal{H}_n$ by going repeating $P$ times the following procedure: for every  $(x_i,t_i)$ in the supervised dataset, sample $a_i\sim\pi_0(x)$ and log the cost $c(a_i,x_i)=\lVert a_i - t_i\rVert_1$. Following \citep{swaminathan2015batch}, we call $P$ the \emph{replay count}.

\subsection{DRO Confidence Intervals}
\label{sec:exp_pe}

We start this experimental section by performing a \emph{sanity check} on the (asymptotic) confidence intervals that we based our policy optimization method on. Formally, we evaluate the finite-time validity of Equation~\eqref{eq:conf_interval}. Being asymptotic, we can safely expect DRO-based confidence intervals to be smaller than their finite-time counterparts - i.e confidence intervals based on Hoeffding or empirical Bernstein tail inequalities (see \citep{thomas2015high} for their application to policy evaluation). We however wish to check that they provide reasonable coverage in non-asymptotic regimes. To do so, we train a policy $\pi$ on a random subset of $\mathcal{D}
^\star$ and evaluate the empirical mean coverage and width of DRO-based intervals. In Figure~\ref{fig:pe}, we present such results on two datasets: Yeast and Scene, taken from the LibSVM repository and standard for the policy optimization task \citep{swaminathan2015batch}. The empirical coverageis reported for increasing values of the replay count $P$, or equivalently for increasing values of the historic data size $n$. The failure level is set to $\delta=0.95$ in all experiments. We observe that the DRO- ased confidence interval provide almost exact $(1-\delta)$ coverage, and it is therefore safe to use them even in the small data regime. As a side comment, we observe that all four $\varphi$-divergence lead to very similar results. Finally, we check experimentally that as expected, the asymptotic DRO-based confidence intervals are by orders of magnitude smaller than finite-time ones.

\begin{figure}[t]
\begin{subfigure}{0.44\linewidth}
    \includegraphics[width=\linewidth]{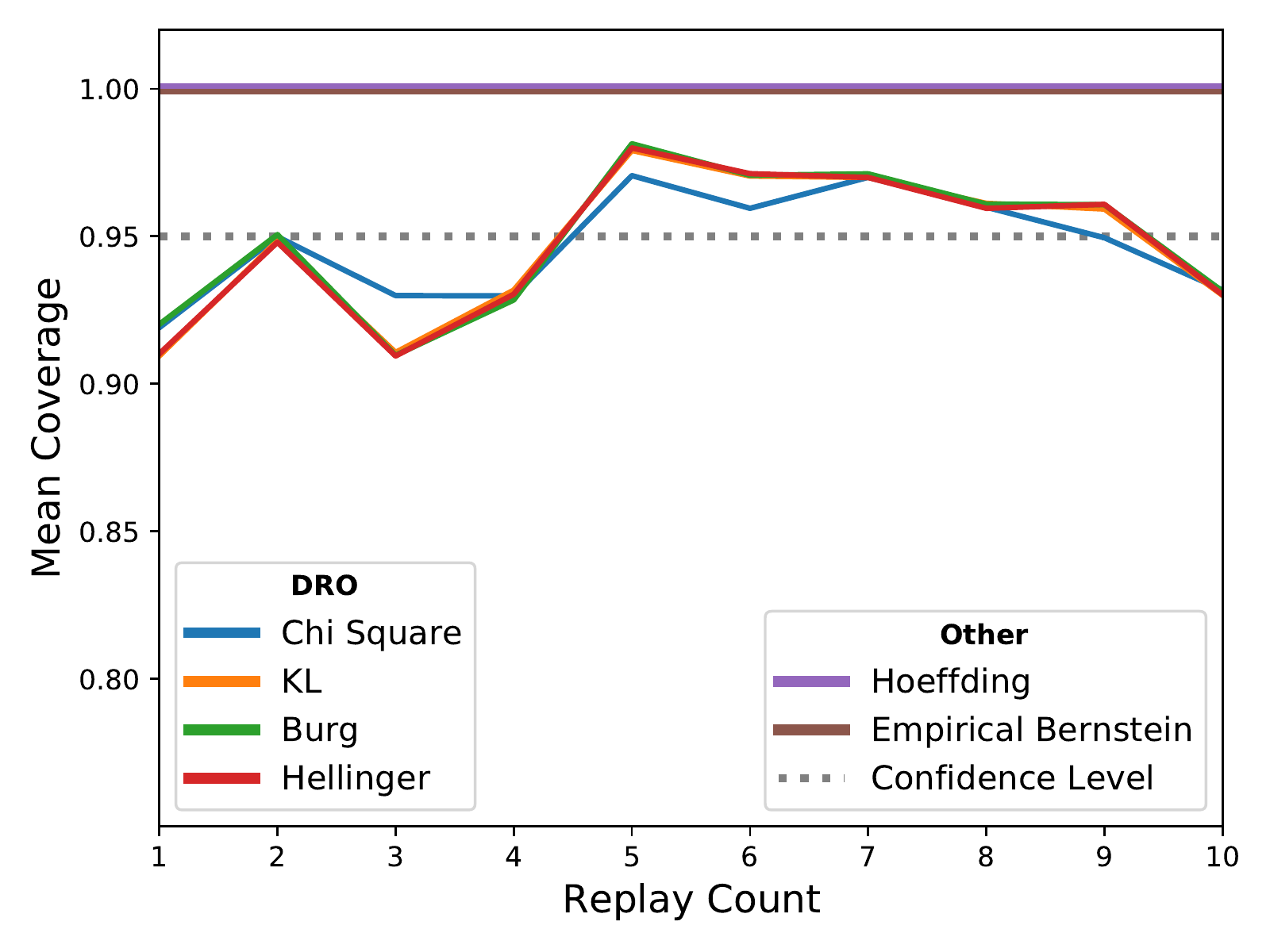}
    \caption{Empirical coverage of the true risk for different confidence intervals on the Yeast dataset.}
\end{subfigure}
\begin{subfigure}{0.44\linewidth}
\includegraphics[width=\linewidth]{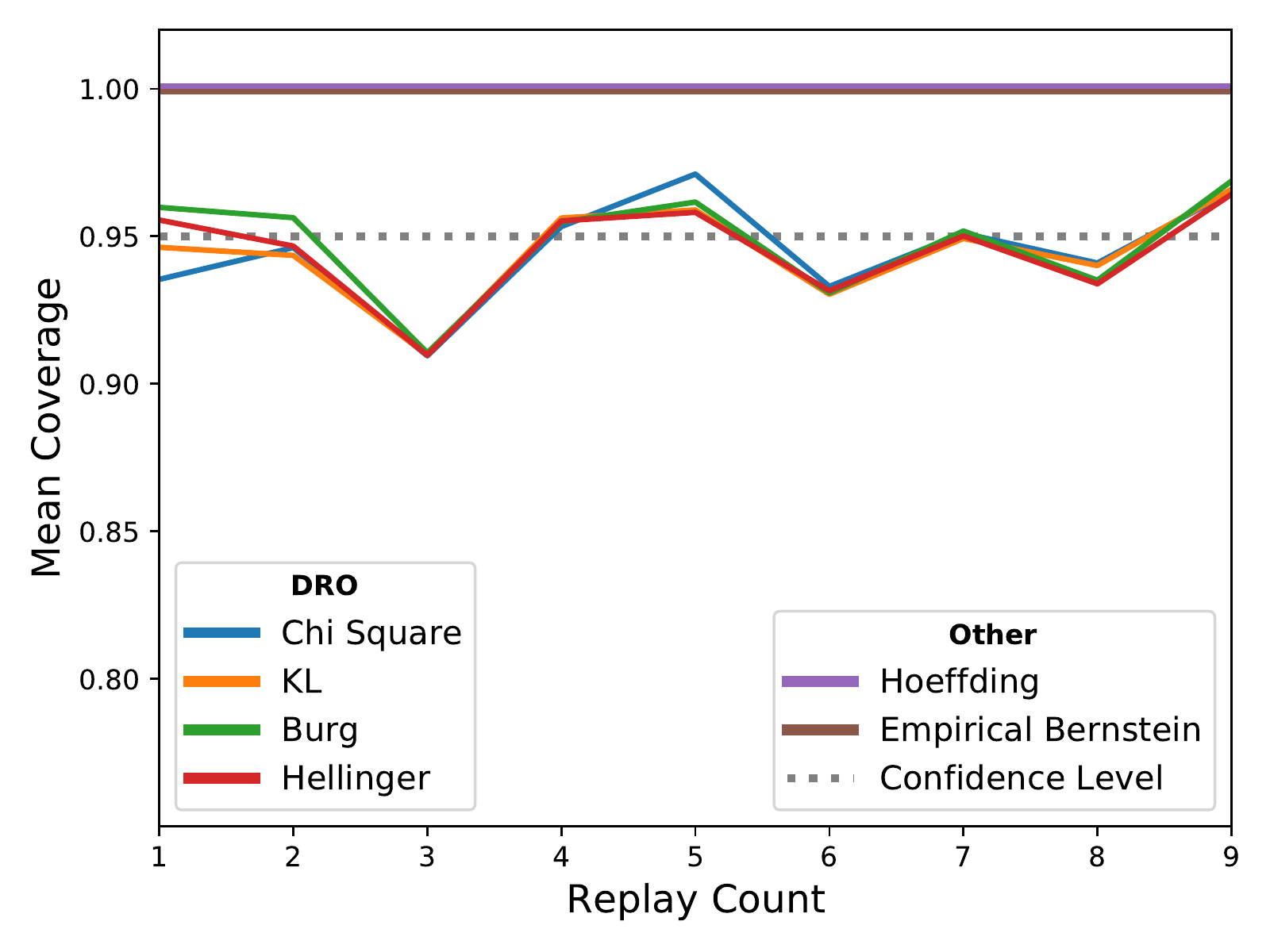}
\caption{Empirical coverage of the true risk for different confidence intervals on the Scene dataset.}
\end{subfigure}
\caption{Finite-time evaluation (coverage) for asymptotic DRO-based confidence intervals.}
\label{fig:pe}
\end{figure}

\subsection{Policy Optimization}
\label{sec:exp_po}

We report here some preliminary results for the policy optimization, for which we strictly follow the experimental procedure of \cite{swaminathan2015batch}. The supervised dataset $\mathcal{D}^\star$ is split into three parts (train, validation and test). A logging policy is train on a random fraction ($=0.1$) of $\mathcal{D}_*$ and used to collect the history $\mathcal{H}_n$ by running it through the training data $P=4$ times. For DRO-based algorithms, the validation set is not used, since no hyper-parameter needs to be tuned (we use the value recommended by the asymptotic analysis for $\varepsilon$, with a fixed confidence level at $\delta=0.05$). For POEM and its stochastic approximation, the parameter $\lambda$ is selected by cross-validation on the validation data. We report in Figure
~\ref{fig:res_po} the risks of the policy (and their greedy versions) returned by the different algorithms. As in \cite{swaminathan2015batch}, all policies are parametrized linearly, with a softmax output activation layer. We present results for both batch algorithms ($\mcal{B})$ and their stochastic versions  ($\mcal{S})$. Results are average over 20 random repetitions. 

\begin{figure}[t]
\begin{subfigure}{0.45\linewidth}
    \centering
    \begin{tabular}{|c||c|c|}
    \hline
        \textbf{Algorithm} & \textbf{Risk} & \textbf{Greedy-Risk} \\
        \Xhline{2\arrayrulewidth}
        POEM-$\mcal{B}$ & 0.93 (0.06) & 0.91 (0.06)\\
        DRO-$\mcal{B}$-$\chi^2$ & \textbf{0.89} (0.06) & 0.87 (0.06)\\
        DRO-$\mcal{B}$-KL & 0.90 (0.06) & 0.88 (0.06)\\
        DRO-$\mcal{B}$-Burg & 1.06 (0.06)  &\textbf{0.85} (0.05)\\
        DRO-$\mcal{B}$-Hellinger & 1.06 (0.06) & 0.85 (0.05) \\
   \hline
    POEM-$\mcal{S}$ & \textbf{1.0} (0.05) & \textbf{0.97} (0.05) \\ 
    DRO-$\mcal{S}$-$\chi^2$ & 1.05 (0.06) & 1.02 (0.06) \\
    DRO-$\mcal{S}$-KL & 1.06 (0.07) & 1.04 (0.07)\\
    DRO-$\mcal{S}$-Burg & 1.12 (0.05) & 1.08 (0.05) \\
    DRO-$\mcal{S}$-Hellinger & 1.3 (0.08) & 1.18 (0.07)\\
    \hline
    \end{tabular}
    \caption{Scene dataset. The quantity in parenthesis represent one standard deviation of the aggregated results.}
\end{subfigure}\hfill
\begin{subfigure}{0.45\linewidth}
    \begin{tabular}{|c||c|c|}
    \hline
        \textbf{Algorithm} & \textbf{Risk} & \textbf{Greedy-Risk} \\
        \Xhline{2\arrayrulewidth}
        POEM-$\mcal{B}$ & 5.15 (0.07) & 4.34 (0.13)\\
        DRO-$\mcal{B}$-$\chi^2$ & 5.21 (0.05) & 4.38 (0.12)\\
        DRO-$\mcal{B}$-KL & 5.32 (0.04) & 5.29 (0.11)\\
        DRO-$\mcal{B}$-Burg & \textbf{4.77} (0.07)  & \textbf{3.74} (0.13)\\
        DRO-$\mcal{B}$-Hellinger & \textbf{4.77} (0.07) & \textbf{3.74} (0.13) \\
   \hline
    POEM-$\mcal{S}$ &  \textbf{5.16} (0.05) & \textbf{4.62} (0.1)\\ 
    DRO-$\mcal{S}$-$\chi^2$ & 5.17 (0.06) & 4.71 (0.11) \\
    DRO-$\mcal{S}$-KL & 5.17  (0.06) & 4.72 ( 0.12)\\
    DRO-$\mcal{S}$-Burg & 5.17 (0.06) & 4.72 (0.1) \\
    DRO-$\mcal{S}$-Hellinger & 5.27 (0.06) & 4.71 (0.1)\\
    \hline
    \end{tabular}
    \caption{Yeast dataset. The quantity in parenthesis represent one standard deviation of the aggregated results.}
\end{subfigure}
\caption{Policy optimization results.}
\label{fig:res_po}
\end{figure}

\section{Discussion}

For batch algorithms, one can notice that DRO-based methods provide either similar or better empirical results than POEM on both considered datasets, while being hyper-parameter free (which is not the case of POEM). On the Yeast dataset, the improvement is quite significative for two of the four $\varphi$-divergence (Burg and Hellinger). On the negative side, it seems there is no consistency in the relative performance of the different divergences. This is quite troublesome in practice, as to the best of our knowledge there is no obvious nor preferable choice of divergences given a dataset.  A solution to this problem is probably to cross-validate this choice, potentially over a continuous parametrization of the divergence considered here (such as the parameter of a Cressie-Read divergence). Finally, we note that POEM-$\mcal{S}$ dominates among the stochastic algorithm considered. This is however to be nuanced, as this algorithm still needs to load in memory the entire dataset at every epoch (\emph{e.g.} every time an upper-bound on the true objective is constructed). This is not the case for DRO-based algorithms. We also believe that the nonetheless good performances reported here for stochastic DRO algorithms will turn to be decisive when considering more complex policies (\emph{e.g} parametrized by a neural network, where POEM-$\mcal{S}$ have been reported to fail). We plan on investigating this in future work, along with the performance of other robustified estimators (cf. Lemma
~\ref{lemma:lemmaestimator} and \ref{lemma:lemmalogtrick}.)

\newpage
  
\bibliographystyle{ACM-Reference-Format}
\bibliography{bibliography}


\begin{thebibliography}{18}


\ifx \showCODEN    \undefined \def \showCODEN     #1{\unskip}     \fi
\ifx \showDOI      \undefined \def \showDOI       #1{#1}\fi
\ifx \showISBNx    \undefined \def \showISBNx     #1{\unskip}     \fi
\ifx \showISBNxiii \undefined \def \showISBNxiii  #1{\unskip}     \fi
\ifx \showISSN     \undefined \def \showISSN      #1{\unskip}     \fi
\ifx \showLCCN     \undefined \def \showLCCN      #1{\unskip}     \fi
\ifx \shownote     \undefined \def \shownote      #1{#1}          \fi
\ifx \showarticletitle \undefined \def \showarticletitle #1{#1}   \fi
\ifx \showURL      \undefined \def \showURL       {\relax}        \fi
\providecommand\bibfield[2]{#2}
\providecommand\bibinfo[2]{#2}
\providecommand\natexlab[1]{#1}
\providecommand\showeprint[2][]{arXiv:#2}

\bibitem[\protect\citeauthoryear{Agarwal, Hsu, Kale, Langford, Li, and
  Schapire}{Agarwal et~al\mbox{.}}{2014}]%
        {agarwal2014taming}
\bibfield{author}{\bibinfo{person}{Alekh Agarwal}, \bibinfo{person}{Daniel
  Hsu}, \bibinfo{person}{Satyen Kale}, \bibinfo{person}{John Langford},
  \bibinfo{person}{Lihong Li}, {and} \bibinfo{person}{Robert Schapire}.}
  \bibinfo{year}{2014}\natexlab{}.
\newblock \showarticletitle{{Taming the Monster: A Fast and Simple Algorithm
  for Contextual Bandits}}. In \bibinfo{booktitle}{\emph{International
  Conference on Machine Learning}}. \bibinfo{pages}{1638--1646}.
\newblock


\bibitem[\protect\citeauthoryear{Ben-Tal, Den~Hertog, De~Waegenaere, Melenberg,
  and Rennen}{Ben-Tal et~al\mbox{.}}{2013}]%
        {ben2013robust}
\bibfield{author}{\bibinfo{person}{Aharon Ben-Tal}, \bibinfo{person}{Dick
  Den~Hertog}, \bibinfo{person}{Anja De~Waegenaere}, \bibinfo{person}{Bertrand
  Melenberg}, {and} \bibinfo{person}{Gijs Rennen}.}
  \bibinfo{year}{2013}\natexlab{}.
\newblock \showarticletitle{Robust solutions of optimization problems affected
  by uncertain probabilities}.
\newblock \bibinfo{journal}{\emph{Management Science}} \bibinfo{volume}{59},
  \bibinfo{number}{2} (\bibinfo{year}{2013}), \bibinfo{pages}{341--357}.
\newblock


\bibitem[\protect\citeauthoryear{Bottou, Peters, Qui{{\~n}}onero-Candela,
  Charles, Chickering, Portugaly, Ray, Simard, and Snelson}{Bottou
  et~al\mbox{.}}{2013}]%
        {bottou2015counterfactual}
\bibfield{author}{\bibinfo{person}{L{{\'e}}on Bottou}, \bibinfo{person}{Jonas
  Peters}, \bibinfo{person}{Joaquin Qui{{\~n}}onero-Candela},
  \bibinfo{person}{Denis~X. Charles}, \bibinfo{person}{D.~Max Chickering},
  \bibinfo{person}{Elon Portugaly}, \bibinfo{person}{Dipankar Ray},
  \bibinfo{person}{Patrice Simard}, {and} \bibinfo{person}{Ed Snelson}.}
  \bibinfo{year}{2013}\natexlab{}.
\newblock \showarticletitle{{Counterfactual Reasoning and Learning Systems: The
  Example of Computational Advertising}}.
\newblock \bibinfo{journal}{\emph{Journal of Machine Learning Research}}
  \bibinfo{volume}{14}, \bibinfo{number}{65} (\bibinfo{year}{2013}),
  \bibinfo{pages}{3207--3260}.
\newblock
\urldef\tempurl%
\url{http://jmlr.org/papers/v14/bottou13a.html}
\showURL{%
\tempurl}


\bibitem[\protect\citeauthoryear{Combettes}{Combettes}{2018}]%
        {combettes2018perspective}
\bibfield{author}{\bibinfo{person}{Patrick~L Combettes}.}
  \bibinfo{year}{2018}\natexlab{}.
\newblock \showarticletitle{{Perspective Functions: Properties, Constructions,
  and Examples}}.
\newblock \bibinfo{journal}{\emph{Set-Valued and Variational Analysis}}
  \bibinfo{volume}{26}, \bibinfo{number}{2} (\bibinfo{year}{2018}),
  \bibinfo{pages}{247--264}.
\newblock


\bibitem[\protect\citeauthoryear{Duchi, Glynn, and Namkoong}{Duchi
  et~al\mbox{.}}{2016}]%
        {duchi2016statistics}
\bibfield{author}{\bibinfo{person}{John Duchi}, \bibinfo{person}{Peter Glynn},
  {and} \bibinfo{person}{Hongseok Namkoong}.} \bibinfo{year}{2016}\natexlab{}.
\newblock \showarticletitle{{Statistics of Robust Optimization: A Generalized
  Empirical Likelihood Approach}}.
\newblock \bibinfo{journal}{\emph{arXiv preprint arXiv:1610.03425}}
  (\bibinfo{year}{2016}).
\newblock


\bibitem[\protect\citeauthoryear{Faury, Tanielan, Vasile, Smirnova, and
  Dohmatob}{Faury et~al\mbox{.}}{2020}]%
        {faury20distributionally}
\bibfield{author}{\bibinfo{person}{Louis Faury}, \bibinfo{person}{Ugo
  Tanielan}, \bibinfo{person}{Flavian Vasile}, \bibinfo{person}{Elena
  Smirnova}, {and} \bibinfo{person}{Elvis Dohmatob}.}
  \bibinfo{year}{2020}\natexlab{}.
\newblock \showarticletitle{{Distributionally Robust Counterfactual Risk
  Minimization}}. In \bibinfo{booktitle}{\emph{Thirty-Fourth AAAI Conference on
  Artificial Intelligence}}.
\newblock


\bibitem[\protect\citeauthoryear{Le~Roux}{Le~Roux}{2016}]%
        {logtrick}
\bibfield{author}{\bibinfo{person}{Nicolas Le~Roux}.}
  \bibinfo{year}{2016}\natexlab{}.
\newblock \showarticletitle{Tighter bounds lead to improved classifiers}.
\newblock \bibinfo{journal}{\emph{CoRR}}  \bibinfo{volume}{abs/1606.09202}
  (\bibinfo{year}{2016}).
\newblock
\showeprint[arxiv]{1606.09202}
\urldef\tempurl%
\url{http://arxiv.org/abs/1606.09202}
\showURL{%
\tempurl}


\bibitem[\protect\citeauthoryear{Li, Chu, Langford, and Schapire}{Li
  et~al\mbox{.}}{2010}]%
        {li2010contextual}
\bibfield{author}{\bibinfo{person}{Lihong Li}, \bibinfo{person}{Wei Chu},
  \bibinfo{person}{John Langford}, {and} \bibinfo{person}{Robert~E Schapire}.}
  \bibinfo{year}{2010}\natexlab{}.
\newblock \showarticletitle{A contextual-bandit approach to personalized news
  article recommendation}. In \bibinfo{booktitle}{\emph{Proceedings of the 19th
  international conference on World Wide Web}}. \bibinfo{pages}{661--670}.
\newblock


\bibitem[\protect\citeauthoryear{Maurer and Pontil}{Maurer and Pontil}{2009}]%
        {maurer2009empirical}
\bibfield{author}{\bibinfo{person}{Andreas Maurer} {and}
  \bibinfo{person}{Massimiliano Pontil}.} \bibinfo{year}{2009}\natexlab{}.
\newblock \showarticletitle{{Empirical Bernstein Bounds and Sample Variance
  Penalization}}. In \bibinfo{booktitle}{\emph{Proceedings of the 22nd
  Conference on Learning Theory}}.
\newblock


\bibitem[\protect\citeauthoryear{Namkoong and Duchi}{Namkoong and
  Duchi}{2016}]%
        {namkoong2016stochastic}
\bibfield{author}{\bibinfo{person}{Hongseok Namkoong} {and}
  \bibinfo{person}{John~C Duchi}.} \bibinfo{year}{2016}\natexlab{}.
\newblock \showarticletitle{{Stochastic Gradient Methods for Distributionally
  Robust Optimization with f-divergences}}. In
  \bibinfo{booktitle}{\emph{Advances in Neural Information Processing
  Systems}}. \bibinfo{pages}{2208--2216}.
\newblock


\bibitem[\protect\citeauthoryear{Rosenbaum and Rubin}{Rosenbaum and
  Rubin}{1983}]%
        {rosenbaum1983central}
\bibfield{author}{\bibinfo{person}{Paul~R Rosenbaum} {and}
  \bibinfo{person}{Donald~B Rubin}.} \bibinfo{year}{1983}\natexlab{}.
\newblock \showarticletitle{The central role of the propensity score in
  observational studies for causal effects}.
\newblock \bibinfo{journal}{\emph{Biometrika}} \bibinfo{volume}{70},
  \bibinfo{number}{1} (\bibinfo{year}{1983}), \bibinfo{pages}{41--55}.
\newblock


\bibitem[\protect\citeauthoryear{Ruder}{Ruder}{2016}]%
        {ruder2016overview}
\bibfield{author}{\bibinfo{person}{Sebastian Ruder}.}
  \bibinfo{year}{2016}\natexlab{}.
\newblock \showarticletitle{An overview of gradient descent optimization
  algorithms}.
\newblock \bibinfo{journal}{\emph{arXiv preprint arXiv:1609.04747}}
  (\bibinfo{year}{2016}).
\newblock


\bibitem[\protect\citeauthoryear{Swaminathan and Joachims}{Swaminathan and
  Joachims}{2015a}]%
        {swaminathan2015batch}
\bibfield{author}{\bibinfo{person}{Adith Swaminathan} {and}
  \bibinfo{person}{Thorsten Joachims}.} \bibinfo{year}{2015}\natexlab{a}.
\newblock \showarticletitle{{Batch Learning from Logged Bandit Feedback through
  Counterfactual Risk Minimization}}.
\newblock \bibinfo{journal}{\emph{Journal of Machine Learning Research}}
  \bibinfo{volume}{16}, \bibinfo{number}{52} (\bibinfo{year}{2015}),
  \bibinfo{pages}{1731--1755}.
\newblock
\urldef\tempurl%
\url{http://jmlr.org/papers/v16/swaminathan15a.html}
\showURL{%
\tempurl}


\bibitem[\protect\citeauthoryear{Swaminathan and Joachims}{Swaminathan and
  Joachims}{2015b}]%
        {snips}
\bibfield{author}{\bibinfo{person}{Adith Swaminathan} {and}
  \bibinfo{person}{Thorsten Joachims}.} \bibinfo{year}{2015}\natexlab{b}.
\newblock \showarticletitle{{The Self-Normalized Estimator for Counterfactual
  Learning}}. In \bibinfo{booktitle}{\emph{Proceedings of the 28th
  International Conference on Neural Information Processing Systems - Volume
  2}} \emph{(\bibinfo{series}{NIPS’15})}. \bibinfo{publisher}{MIT Press},
  \bibinfo{address}{Cambridge, MA, USA}, \bibinfo{pages}{3231–3239}.
\newblock


\bibitem[\protect\citeauthoryear{Tewari and Murphy}{Tewari and Murphy}{2017}]%
        {tewari2017ads}
\bibfield{author}{\bibinfo{person}{Ambuj Tewari} {and} \bibinfo{person}{Susan~A
  Murphy}.} \bibinfo{year}{2017}\natexlab{}.
\newblock \showarticletitle{{From Ads to Interventions: Contextual Bandits in
  Mobile Health}}.
\newblock In \bibinfo{booktitle}{\emph{Mobile Health}}.
  \bibinfo{publisher}{Springer}, \bibinfo{pages}{495--517}.
\newblock


\bibitem[\protect\citeauthoryear{Thomas, Theocharous, and Ghavamzadeh}{Thomas
  et~al\mbox{.}}{2015}]%
        {thomas2015high}
\bibfield{author}{\bibinfo{person}{Philip~S Thomas}, \bibinfo{person}{Georgios
  Theocharous}, {and} \bibinfo{person}{Mohammad Ghavamzadeh}.}
  \bibinfo{year}{2015}\natexlab{}.
\newblock \showarticletitle{{High-Confidence Off-Policy Evaluation}}. In
  \bibinfo{booktitle}{\emph{Twenty-Ninth AAAI Conference on Artificial
  Intelligence}}.
\newblock


\bibitem[\protect\citeauthoryear{Valko, Munos, Kveton, and Koc{\'a}k}{Valko
  et~al\mbox{.}}{2014}]%
        {valko2014spectral}
\bibfield{author}{\bibinfo{person}{Michal Valko}, \bibinfo{person}{R{\'e}mi
  Munos}, \bibinfo{person}{Branislav Kveton}, {and}
  \bibinfo{person}{Tom{\'a}{\v{s}} Koc{\'a}k}.}
  \bibinfo{year}{2014}\natexlab{}.
\newblock \showarticletitle{{Spectral Bandits for Smooth Graph Functions}}. In
  \bibinfo{booktitle}{\emph{International Conference on Machine Learning}}.
  \bibinfo{pages}{46--54}.
\newblock


\bibitem[\protect\citeauthoryear{Villar, Bowden, and Wason}{Villar
  et~al\mbox{.}}{2015}]%
        {villar2015multi}
\bibfield{author}{\bibinfo{person}{Sof{\'\i}a~S Villar}, \bibinfo{person}{Jack
  Bowden}, {and} \bibinfo{person}{James Wason}.}
  \bibinfo{year}{2015}\natexlab{}.
\newblock \showarticletitle{Multi-armed bandit models for the optimal design of
  clinical trials: benefits and challenges}.
\newblock \bibinfo{journal}{\emph{Statistical science: a review journal of the
  Institute of Mathematical Statistics}} \bibinfo{volume}{30},
  \bibinfo{number}{2} (\bibinfo{year}{2015}), \bibinfo{pages}{199}.
\newblock


\end{thebibliography}

\appendix

\section{Proof of Lemma~\ref{lemma:lemmadual}}
\label{app:prooflemmadual}
\lemmadual*

\begin{proof}
Recall the definition of the robust risk:
\begin{align}
    \label{eq:prog}
     \robustrisk{\varphi}{\pi}{\epsilon}:= \sup_{q\in\Delta_n}\left\{\sum_{i=1}^n q_i\omega_\pi(x_i,a_i)c(x_i,a_i)\quad \text{s.t} \quad d_\varphi(q,1_n)\leq \epsilon\right\}
     \tag{\textbf{P}}
\end{align}
where:
\begin{equation*}
    \left\{
    \begin{aligned}
        &\Delta_n = \left\{p\in\mbb{R}_n^+ \, \middle\vert \sum_{i=1}^n p_i = 1\right\}\\
        &1_n = \frac{1}{n}(1 \hdots 1)^\transp \in\mbb{R}^n\\
        &d_\varphi(q,p) = \sum_{i=1}^n p_i\varphi\left(\frac{q_i}{p_i}\right) \quad \forall q\ll p \in\Delta_n
    \end{aligned}
    \right.
\end{equation*}
Note that the program~\eqref{eq:prog} optimizes a linear objective under convex constraints (since $\varphi$ is convex). Further, when $\epsilon>0$, the candidate $q=1_n$ is \emph{strictly} feasible. Therefore, Slater's condition holds and \eqref{eq:prog} enjoys strong duality. Writing down its Lagrangian, we obtain the following equivalence:
\begin{align}
\label{eq:intermediary_def_g}
    \robustrisk{\varphi}{\pi}{\epsilon} &= \sup_{q\succeq 0} \inf_{\beta,\gamma\geq 0} \sum_{i=1}^n q_i\omega_\pi(x_i,a_i)c(x_i,a_i) + \beta\left(1-\sum_{i=1}^n q_i\right) + \gamma\left(\epsilon - \frac{1}{n}\sum_{i=1}^n \varphi(nq_i)\right)\notag \\
     &=\inf_{\beta,\gamma\geq 0} \sup_{q\succeq 0} \sum_{i=1}^n q_i\omega_\pi(x_i,a_i)c(x_i,a_i) + \beta\left(1-\sum_{i=1}^n q_i\right) + \gamma\left(\epsilon - \frac{1}{n}\sum_{i=1}^n \varphi(nq_i)\right)\notag\\
     &= \inf_{\beta,\gamma\geq 0} \beta+\gamma\epsilon  + \frac{1}{n}\sum_{i=1}^n \sup_{q_i\geq 0}\left\{(nq_i)\omega_\pi(x_i,a_i)c(x_i,a_i)-\gamma\varphi(nq_i)\right\}
\end{align}
where the first equality is a consequence of strong duality, and the second is obtained through simple re-arranging. If $\gamma\neq 0$, easy computations lead to:
\begin{align*}
    \robustrisk{\varphi}{\pi}{\epsilon} &= \inf_{\beta,\gamma\geq 0} \beta+\gamma\epsilon  + \frac{\gamma}{n}\sum_{i=1}^n \sup_{q_i\geq 0}\left\{(nq_i)\frac{\omega_\pi(x_i,a_i)c(x_i,a_i)}{\gamma}-\varphi(nq_i)\right\}\\
    &= \inf_{\beta,\gamma\geq 0} \beta+\gamma\epsilon  + \frac{\gamma}{n}\sum_{i=1}^n \varphi^\star\left(\frac{\omega_\pi(x_i,a_i)c(x_i,a_i)}{\gamma}\right)
\end{align*}
by using the definition of $\varphi^\star$. The limit conditions announced in the Lemma are easily checked by computing the dual function when $\gamma=0$. We therefore obtain the equality announced by using the definition of $g_\pi$:
\begin{align*}
    \robustrisk{\varphi}{\pi}{\epsilon} = \inf_{\beta,\gamma\geq 0} g_\pi(\beta,\gamma)
\end{align*}
The convexity of $g_\pi$ can be obtained two ways; (1) by noticing that $g_\pi$ is obtained through convexity-transforming transformations of a \emph{perspective} function \citep{combettes2018perspective}, or (2) by noticing thanks to Equation~\eqref{eq:intermediary_def_g} that:
\begin{align*}
    (\pi,\beta,\gamma) \to  \sum_{i=1}^n \sup_{q_i\geq 0}\left\{(nq_i)\omega_\pi(x_i,a_i)c(x_i,a_i)-\gamma\varphi(nq_i)\right\}
\end{align*}
is convex as a sum of supremum of linear (and hence convex) functions. 
\end{proof}





\end{document}